\newtheorem{thm}{Theorem}[section]
\newtheorem{lemma}[thm]{Lemma}
\newtheorem{fig}{Figure}
\newtheorem{tab}{Table}
\newenvironment{proof}{\begin{trivlist}\item[]{\em Proof.\/\ }}%
                      {\hfill$\Box$ \\ \end{trivlist}}
\title{A structure from motion inequality}
\author{Oliver Knill and Jose Ramirez-Herran 
\footnote{Harvard University, this research was supported by the Harvard Extension School}}
\date{August 17, 2007}
\begin{document}
\maketitle

\abstract{
We state an elementary inequality for the structure from motion problem for 
$m$ cameras and $n$ points. This structure from motion inequality relates space dimension, 
camera parameter dimension, the number of cameras and number points and 
global symmetry properties and provides a 
rigorous criterion for which reconstruction is not possible with probability 1. 
Mathematically the inequality is based on Frobenius theorem which is a geometric 
incarnation of the fundamental theorem of linear algebra. The paper also provides a
general mathematical formalism for the structure from motion problem. It 
includes the situation the points can move while the camera 
takes the pictures. 
}

\section{Introduction}

The {\bf structure from motion problem} is the task to reconstruct
$m$ cameras and $n$ points from the $nm$ pictures which the cameras have taken. It is
a central problem in {\bf computer vision} \cite{Ullman,forsyth, hartley,trucco}. \\

We define a {\bf camera} as a piecewise smooth map $Q$ from a $d$-dimensional space 
$N$ to $N$ satisfying $Q^2=Q$ such that $Q(N)$ is a lower dimensional surface,
the {\bf retinal surface}.
We assume that for a given camera type, the set $M$ of all possible cameras 
is a manifold of finite dimension $f$ and that the manifolds $Q(N)$ are all diffeomorphic to 
a fixed manifold $S$. Given $n$ points $P_i$ in $N$ and $m$ points $Q_j$ in $M$,
the problem is to reconstruct the cameras $Q_j$ and the locations $P_i$ of the points 
from the {\bf image data} $Q_j(P_i)$. The map $F$ from $N^n \times M^n \to S^{m n}$ is called
the {\bf structure from motion map}. It is in general nonlinear. We assume that $F$ is real analytic
on an open subset of $N^n \times M^n$. \\

This reconstruction should be {\bf locally unique} after 
factoring out {\bf global symmetries} like for example a common translation of both cameras and points. 
Global symmetries are in general a Lie group $G$ of dimension $g$. The {\bf global symmetry group $G$}
acts on $M \times N$ in such a way that $\gamma(Q) ( \gamma(P) ) = Q(P)$ for every $Q \in M$,
$P \in N$ and $\gamma \in G$.  \\

The name "structure from motion" originates from a different point of view:
fix a single camera and "move" $n$ points by an Euclidean rigid motion. 
The camera then takes $m$ pictures of this moving body. The aim is to reconstruct 
the location of the points and the deformation path of the points.
While this second point of view motivates situations, where 
the points undergo a non-rigid motion or a rigid motion satisfying some constraints 
like angular momentum and energy conservation, we will stick to the first formulation, which 
allows us to includes examples, where the moving camera changes internal camera parameters 
like the focal length while shooting the pictures. \\

A basic question is to find the minimal number of cameras for a given point set or the
minimal number of points for a given number of cameras so that we have a locally 
unique reconstruction. This motivates to look for explicit inversion formulas 
for the structure from motion map $F$ as well as the exploration of ambiguities: 
camera-point configurations which have the same image data. \\

Our formalism is quite general. The configuration manifold $N$ of a single point can for example 
be a finite dimensional manifold of curves. An example would be a situation, where 
$N$ is a $(k+1) \cdot d$-dimensional 
manifold of $k$-jets describing moving bodies with Taylor expansion $P_i(t)=\sum_{l=0}^{k-1} P_{il} t^l$. 
The structure from motion problem in that case is to invert $F$ that is to 
reconstruct the moving points $P_i(t)$ and the cameras $Q_j$ from the camera pictures 
$Q_j(P_i(t_j))$. A concrete example would be a camera mounted on a car taking pictures 
during the drive. The task is to reconstruct not only the surrounding and the path of 
the camera but the motion of the other cars. \\

The mathematics involved in the structure from motion problem depends on the 
camera model and the point model. The former is represented by the {\bf camera parameter manifold} 
$M$ and the later is modeled by the {\bf point manifold} $N$. Each camera
$Q$ is a map from $N$ to a lower-dimensional surface $S \subset N$, the {\bf retinal surface}. 
We want to invert the map $F: N^n \times M^m \to S^{mn}$ modulo a global 
symmetry group $G$ acting on $N$ and $M$. The structure from motion inequality allows us to see
for which $m$ and $n$, the image of the map $F$ has full dimension and so 
that $F^{-1}(\sigma)$ is a discrete set. For example, for orthographic cameras in space, 
$m=3$ cameras and $n=3$ points lead to a locally unique reconstruction 
\cite{KnillRamirezUllman}. For $m=3$ cameras and $n=4$ points, where Ullman's theorem
leads to a unique reconstruction modulo reflection, we have an over-determined system. In general, 
an over-determined system assures the injectivity of $F$ but $F(N)$ is a lower dimensional surface in 
$S^{n m}$. 

\section{Examples of cameras}

A {\bf perspective camera} in three dimensional space is defined by a point $C$, the {\bf center of projection}
and a plane $S$, the {\bf retinal plane}. Perspective cameras are also called {\bf pinhole cameras}. 
A point $P$ in space is mapped to a point $p=Q(P)$ on $S$ by intersecting the line $CP$ with $S$. 
Perspective cameras can also be defined in the plane where they are 
defined by a point and a line. Limiting cases of perspective cameras are {\bf affine cameras} for which 
parallelism is preserved. Special cases are {\bf weak perspective cameras} and more specially 
{\bf orthographic affine cameras}, where the image is the orthogonal projection of space onto the retinal plane.
A weak perspective camera is an orthogonal projection onto a plane combined with an additional scaling in that plane.
One can think of orthographic cameras as pinhole cameras with the center of projection $C$ is at infinity. 
For affine cameras, the observers position is not determined. So, even if we reconstruct camera and point 
positions, we will not know, where the pictures were taken. An other perspective camera is the 
{\bf push-broom camera} \cite{hartley} 
which is defined by a line $L$ in space and a plane $S$ parallel to the line.
A point $P$ in space is projected onto the line. The image point $Q_(P)$ is the intersection of the 
projection line with the plane. \\

A {\bf spherical camera} in space is defined by a point $C$ and a sphere $S$ centered
at $C$. The map $Q$ maps $P$ to a point $p=Q(P)$ on $S$ by intersecting the line $CP$
with $S$. We label a point $p$ with two spherical Euler angles $(\theta,\phi)$. We also use the more 
common name {\bf omni-directional cameras} or {\bf central panoramic cameras}.
In two dimensions, one can consider {\bf circular camera} defined by a point $C$ and a circle $S$ around the
point. A point $P$ in the plane is mapped onto a point $p$ on $S$ by intersecting the line $CP$
with $S$. Spherical and circular cameras only have the point $C$ and the 
orientation as internal parameters. The radius of the sphere is irrelevant. 
A {\bf cylindrical cameras} in space is defined by a point $C$ and a cylinder $C$ with axes $L$.
A point $P$ is mapped to the point $p$ on $C$ which is the intersection of the line
$C P$ with $C$. A point $p$ in the film surface $S$ can be described 
with cylinder coordinates $(\theta,z)$. 
Because cylindrical cameras capture the entire world except for points on the 
symmetry axes of the cylinder, one could include them in the class of {\bf omni-directional cameras}. 
Omni-directional camera pictures are also called {\bf panoramas}, even if only part of the
360 field of view and part of the height are known.
Cylindrical and spherical cameras are closely related. The Euler angle 
$\phi$ between the line $CP$ and the horizontal plane and the radius $r$ of the cylinder, 
gives the height $z=r \sin(\phi)$, so that a simple change of the coordinate system matches
one situation with the other. We can also remap the picture of a perspective camera to be part 
of an omni-directional camera picture, like if a small part of the sphere is replaced by a region in its 
tangent plane. Because spherical cameras do not have a focal parameter $f$ as perspective cameras, 
they are easy to work with. For more information, see \cite{Benosman}. 
We say, a spherical camera is {\bf oriented}, if its direction is known. Oriented spherical cameras
have only the center of the camera as their internal parameter. The parameter space is therefore 
$d$-dimensional. For non-oriented spherical cameras, there are additionally $d (d-1)/2 = {\rm dim}(SO_d)$ 
parameters to describe the orientation of the camera. 

\begin{center}
\parbox{15.5cm}{
\parbox{4.8cm}{\scalebox{0.45}{\includegraphics{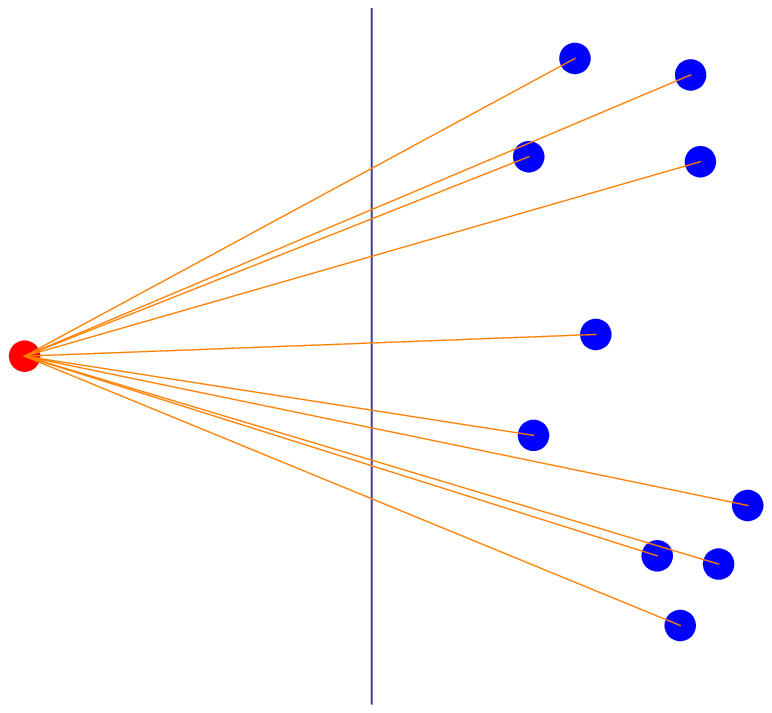}}}
\parbox{4.8cm}{\scalebox{0.45}{\includegraphics{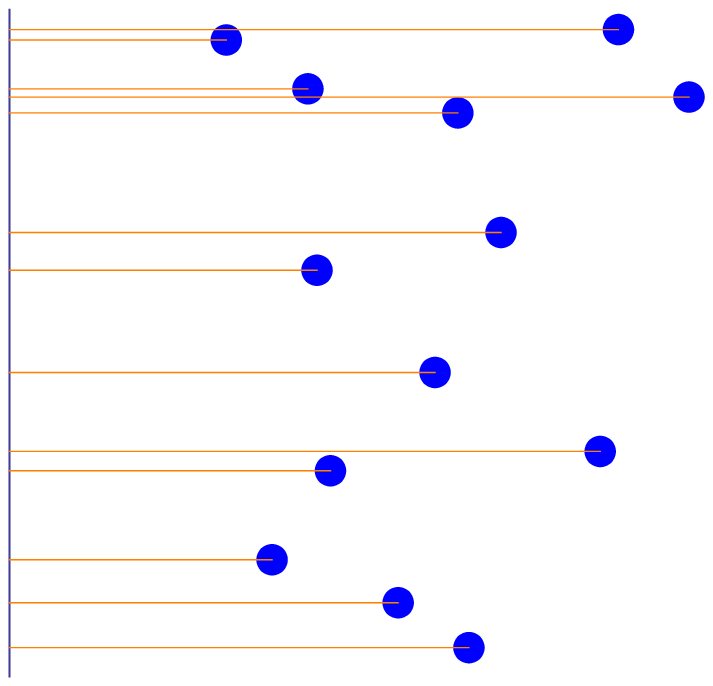}}}
\parbox{4.8cm}{\scalebox{0.45}{\includegraphics{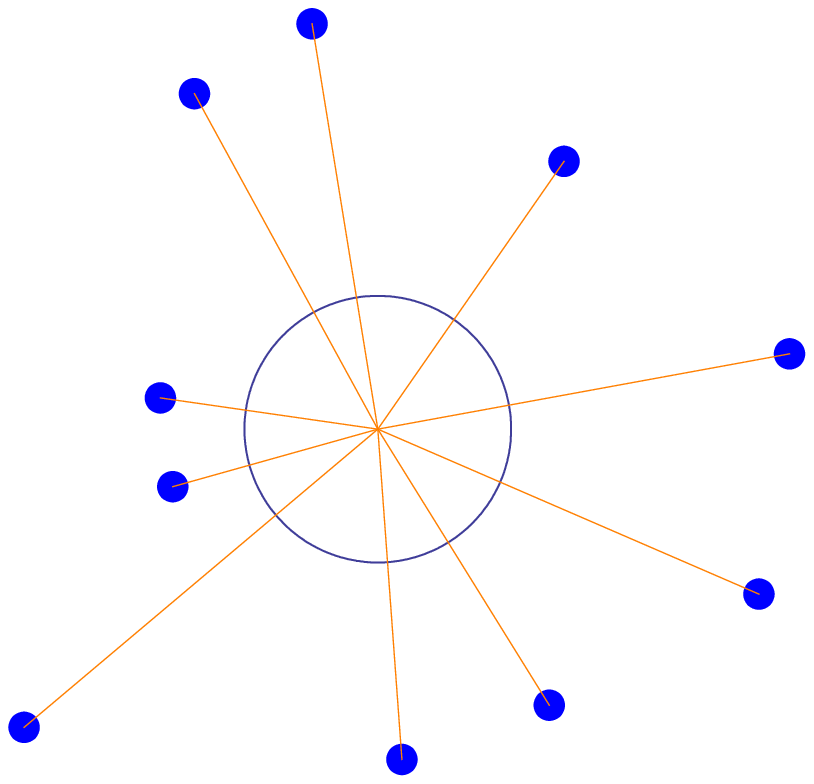}}}
}
\end{center}
\begin{center}
\parbox{15.5cm}{
\parbox{4.8cm}{ Perspective onto a line}
\parbox{4.8cm}{ Orthographic onto a line}
\parbox{4.8cm}{ Circular onto a circle}
}
\end{center}

\begin{center}
\parbox{15.5cm}{
\parbox{4.8cm}{\scalebox{0.45}{\includegraphics{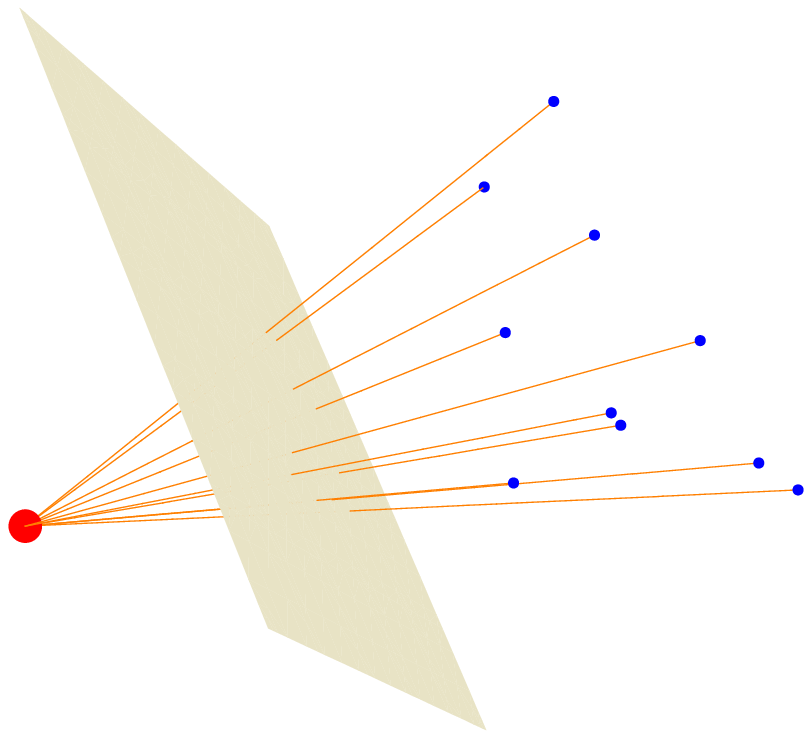}}}
\parbox{4.8cm}{\scalebox{0.45}{\includegraphics{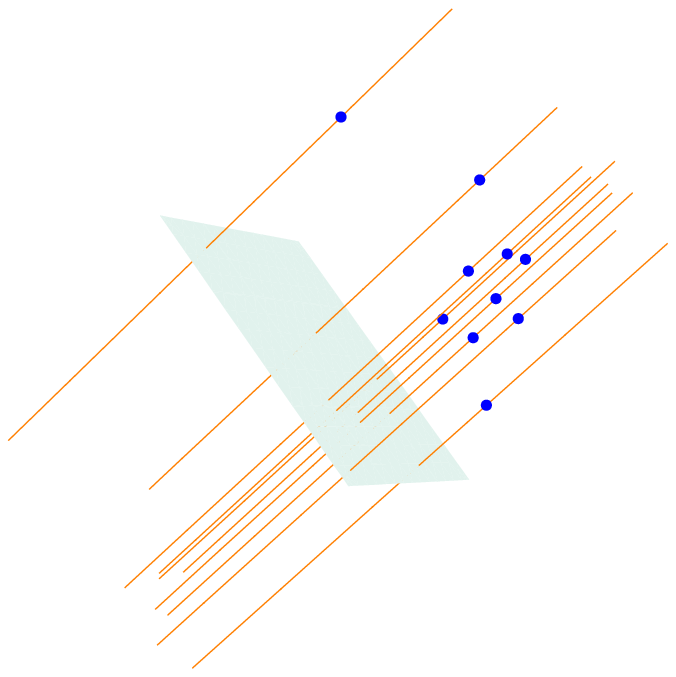}}}
\parbox{4.8cm}{\scalebox{0.45}{\includegraphics{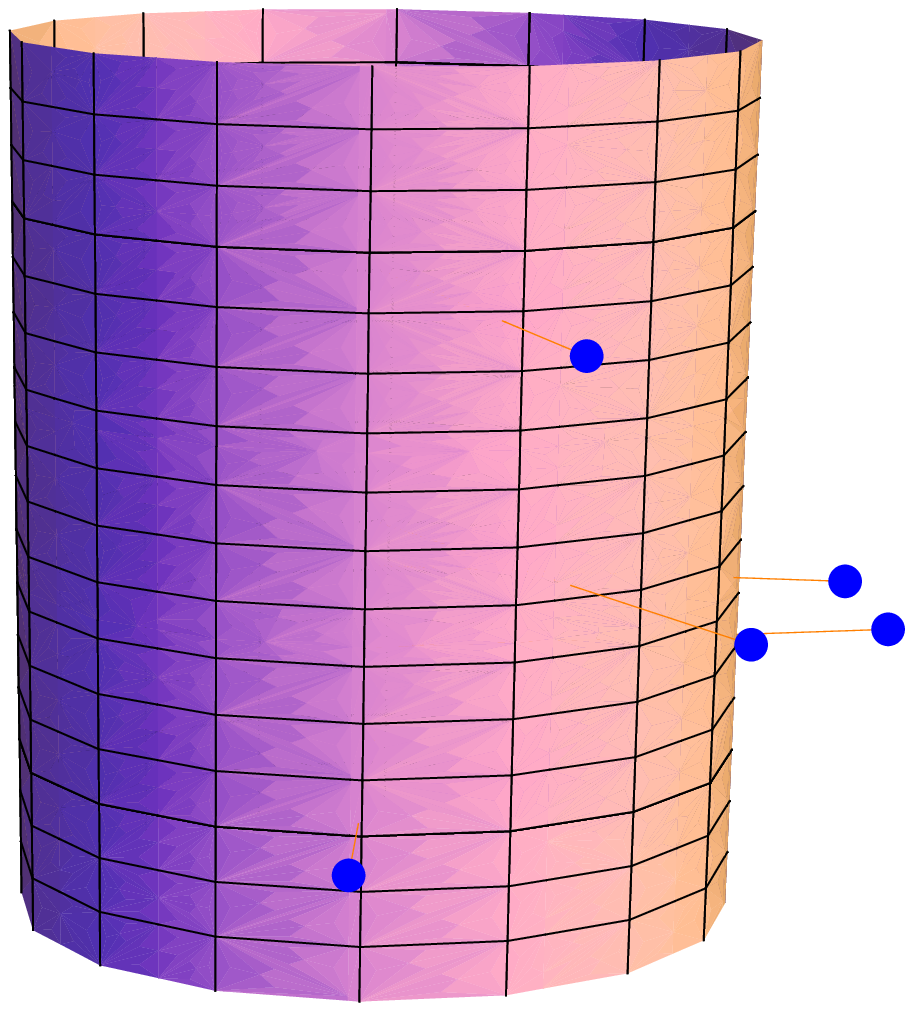}}}
}
\end{center}
\begin{center}
\parbox{15.5cm}{
\parbox{4.8cm}{ Perspective onto a plane}
\parbox{4.8cm}{ Orthographic onto a plane}
\parbox{4.8cm}{ Cylindrical onto a cylinder}
}
\end{center}

\begin{center}
\parbox{15.5cm}{
\parbox{4.8cm}{\scalebox{0.45}{\includegraphics{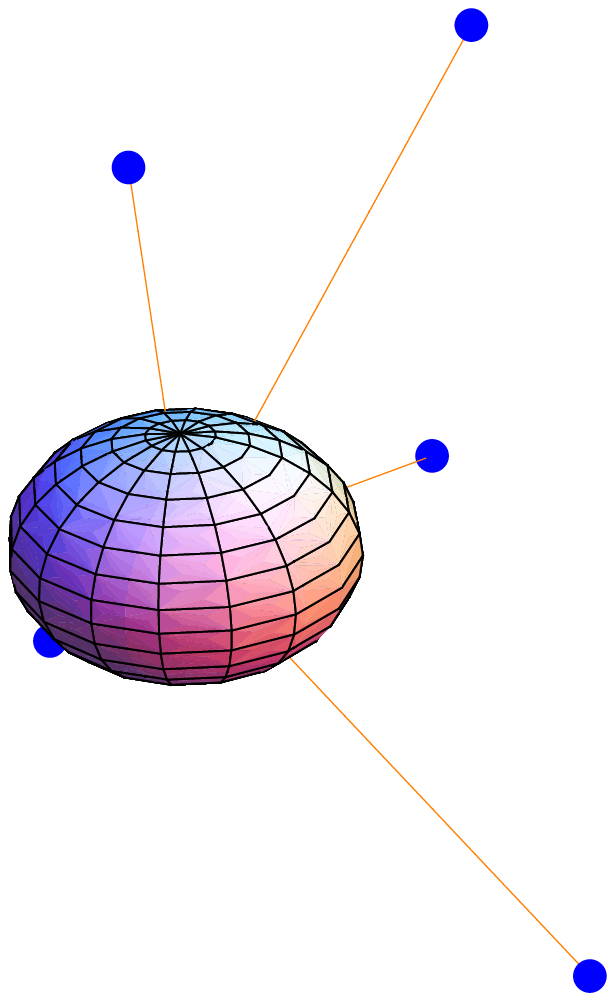}}}
\parbox{4.8cm}{\scalebox{0.45}{\includegraphics{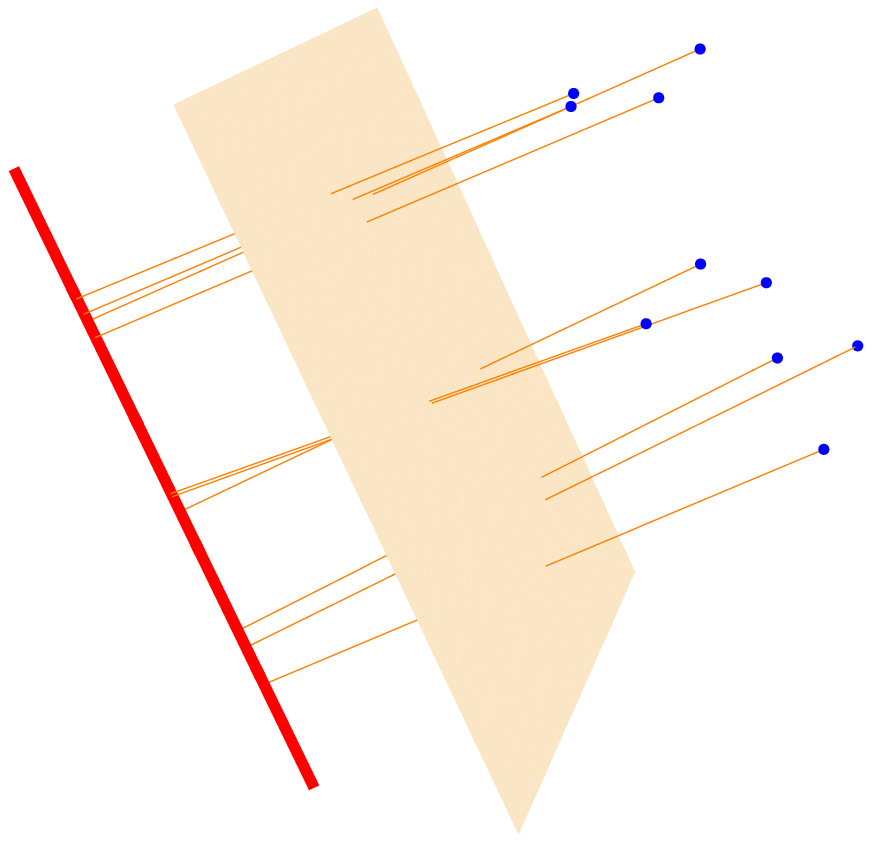}}}
\parbox{4.8cm}{\scalebox{0.45}{\includegraphics{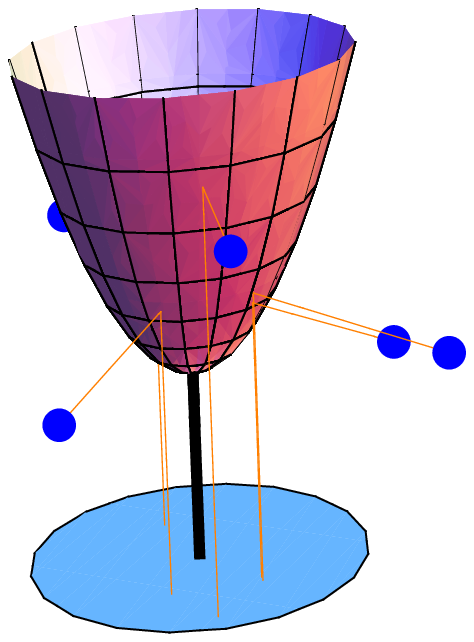}}}
}
\end{center}
\begin{center}
\parbox{15.5cm}{ 
\parbox{4.8cm}{ Spherical onto a sphere}
\parbox{4.8cm}{ Push-broom onto a plane}
\parbox{4.8cm}{ Catadioptric onto a plane}
}
\end{center}

\begin{fig}
Various examples of cameras, maps $Q:N \to N$ satisfying $Q^2=Q$ which have as an image
a hypersurface $Q(N) \subset N$ diffeomorphic to a manifold $S$.
\end{fig}

\section{The camera parameter manifold}

The number of parameters $f$ which determine a camera is the dimension of the 
{\bf camera parameter manifold} $M$.  \\

{\bf Examples:} \\

\noindent
1. For an affine camera in 
$d$ dimensional space, where only the orientation with $d (d-1)/2={\rm dim}(SO_3)$ 
parameters and a translation in the $(d-1)$-dimensional retinal plane with $d-1$ parameters matters, 
the total number of parameters is $f=d (d-1)/2 + (d-1)$. \\
2. For an oriented omni-directional camera, we only 
need to know the position so that $f=d$. \\
3. For a non-oriented omni-directional camera, we need to know additionally the 
orientation which lead to $f=d+d (d-1)/2$ parameters. \\
4. For perspective pinhole cameras, we have have to specify the center of projection,
the plane orientation and the distance of the plane to the point: 
$d + d (d-1)/2 + 1$ which are 7 parameters in $d=3$ dimensions. \\

Let's look first at cameras in the two-dimensional plane. We use the notation $R^2_0=R^2 \setminus \{0,0\}$
for the punctured plane, $S^1$ for the one-dimensional circle. 

\begin{center}
\begin{tabular}{|c|c|c|c|c|c|} \hline
\parbox{1.2cm}{   } & 
\parbox{2.2cm}{affine} & 
\parbox{2.2cm}{omni} &   
\parbox{2.2cm}{n.o.-omni} &  
\parbox{2.2cm}{perspective} & 
\parbox{2.2cm}{f-perspective} \\ \hline
\parbox{1.2cm}{$M=$} & 
\parbox{2.2cm}{$S^1 \times R$} & 
\parbox{2.2cm}{$R^2$} & 
\parbox{2.2cm}{$R^2 \times S^1$} & 
\parbox{2.2cm}{$R^2 \times S^1$} & 
\parbox{2.2cm}{$R^2 \times R^2_0$}  \\ \hline
\parbox{1.2cm}{$f=$} & 
\parbox{2.2cm}{$1+1=2$} & 
\parbox{2.2cm}{$2$} & 
\parbox{2.2cm}{$2+1=3$} & 
\parbox{2.2cm}{$2+1=3$} & 
\parbox{2.2cm}{$2+2=4$}  \\ \hline
\parbox{1.2cm}{   } & 
\parbox{2.2cm}{line slope and origin} &
\parbox{2.2cm}{camera position} &
\parbox{2.2cm}{position and orientation} &
\parbox{2.2cm}{projection center and line slope} &
\parbox{2.2cm}{projection center and line} \\ \hline
\end{tabular}
\end{center}

\begin{tab}
The dimension $f$ of the camera parameter space $M$ in the two-dimensional case $d=2$ for various cameras.
\end{tab}

In the following table, $SO_3$ is the group of all rotations in space. It is a three dimensional Lie group.

\begin{center}
\begin{tabular}{|c|c|c|c|c|c|} \hline
\parbox{1.2cm}{   } & 
\parbox{2.2cm}{affine} & 
\parbox{2.2cm}{omni} &   
\parbox{2.2cm}{n.o.-omni} &  
\parbox{2.2cm}{perspective} & 
\parbox{2.2cm}{f-perspective} \\ \hline
\parbox{1.2cm}{$M=$} & 
\parbox{2.2cm}{$SO_3 \times R^2$} & 
\parbox{2.2cm}{$R^3$} & 
\parbox{2.2cm}{$R^3 + SO_3$} & 
\parbox{2.2cm}{$R^3 \times SO_3$} & 
\parbox{2.2cm}{$R^3 \times SO_3 \times R^+$}  \\  \hline
\parbox{1.2cm}{$f=$} & 
\parbox{2.2cm}{$3+2=5$} & 
\parbox{2.2cm}{$3$} & 
\parbox{2.2cm}{$3+3=6$} & 
\parbox{2.2cm}{$3+3=6$} & 
\parbox{2.2cm}{$3+3+1=7$}  \\  \hline
\parbox{1.2cm}{   } & 
\parbox{2.2cm}{Plane orientation and origin} &
\parbox{2.2cm}{Position of camera} &
\parbox{2.2cm}{Position and orientation} &
\parbox{2.2cm}{Projection center and plane orientation} &
\parbox{2.2cm}{Projection center and plane orientation and distance} \\ \hline
\end{tabular}
\end{center}

\begin{tab}
The dimension $f$ of the camera parameter space $M$ in the three dimensional case $d=3$ for various cameras.
\end{tab}

\section{The point-camera symmetry group}

Depending on the camera, there is global symmetry group $G$ for the structure from motion problem. It acts
on $N$ and $M$. If an element of this group is applied to the point camera positions simultaneously, 
the photographer produces the same photographs. In other words, if $(P',Q')$ is obtained by applying an 
element of $G$ on $(P,Q)$, then $Q_j(P_i) = Q'_j(P'_i)$ for all $1 \leq i \leq n, 1 \leq j \leq m$. 
We call $G$ the {\bf point-camera symmetry group} and denote its dimension by $g$. \\

{\bf Examples.} \\

\noindent
1) For affine cameras in $N=R^d$, the group is the Euclidean group $R^d \times SO_d$ which has
dimension $d+ d(d-1)/2$. \\
2) For oriented spherical cameras, the group consists of dilations. The group of transformations generated by
 translations and scalings. Its dimension is $d+1$. \\
3) For non-oriented spherical cameras, the group consists of all similarities, which are generated by
Euclidean transformations and scalings. The dimension is $d+ d(d-1)/2+1$.  \\
4) For perspective cameras, we just have the group of Euclidean transformations as the global symmetry group. \\

Let's look at some cameras in two dimensions first: 

\begin{center}
\begin{tabular}{|c|c|c|c|c|c|} \hline
\parbox{1.2cm}{ } & 
\parbox{2.2cm}{affine} & 
\parbox{2.2cm}{omni} &   
\parbox{2.2cm}{n.o.-omni} &  
\parbox{2.2cm}{perspective} & 
\parbox{2.2cm}{f-perspective} \\ \hline
\parbox{1.2cm}{$G=$} & 
\parbox{2.2cm}{$R^2 \times SO_2$}  &
\parbox{2.2cm}{$R^2 \times R$}  &
\parbox{2.2cm}{$R^2 \times SO_2 \times R^+$} &
\parbox{2.2cm}{$R^2 \times SO_2$} &
\parbox{2.2cm}{$R^2 \times SO_2 \times R^+$}  \\ \hline
\parbox{1.2cm}{$g=$} & 
\parbox{2.2cm}{$2+1=3$}  &
\parbox{2.2cm}{$2+1=3$}  &
\parbox{2.2cm}{$2+1+1=4$ } &
\parbox{2.2cm}{$2+1=3$} &
\parbox{2.2cm}{$2+1+1=4$}  \\ \hline
\parbox{1.2cm}{         } & 
\parbox{2.2cm}{Euclidean}  &
\parbox{2.2cm}{dilation}  &
\parbox{2.2cm}{similarity} &
\parbox{2.2cm}{Euclidean} &
\parbox{2.2cm}{similarity}  \\ \hline
\end{tabular}
\end{center}

\begin{tab}
The dimension $g$ of the global symmetry group $G$ for the structure from motion problem 
in the case $d=2$ for various cameras.
\end{tab}

\begin{center}
\begin{tabular}{|c|c|c|c|c|c|} \hline
\parbox{1.2cm}{  } & 
\parbox{2.2cm}{affine} & 
\parbox{2.2cm}{omni} &   
\parbox{2.2cm}{n.o.-omni} &  
\parbox{2.2cm}{perspective} & 
\parbox{2.2cm}{f-perspective} \\ \hline
\parbox{1.2cm}{$G=$} & 
\parbox{2.2cm}{$R^3 \times SO_3$}  &
\parbox{2.2cm}{$R^3 \times R$} &
\parbox{2.2cm}{$R^3 \times SO_3 \times R$} &
\parbox{2.2cm}{$R^3 \times SO_3$} &
\parbox{2.2cm}{$R^3 \times SO_3 \times R$} \\ \hline
\parbox{1.2cm}{$g=$} & 
\parbox{2.2cm}{$3+3=6$} &
\parbox{2.2cm}{$3+1=4$} &
\parbox{2.2cm}{$3+3+1=7$} & 
\parbox{2.2cm}{$3+3=6$} &
\parbox{2.2cm}{$3+3+1=7$} \\ \hline
\parbox{1.2cm}{        } & 
\parbox{2.2cm}{Euclidean} &
\parbox{2.2cm}{dilation} &
\parbox{2.2cm}{similarity} &
\parbox{2.2cm}{Euclidean} &
\parbox{2.2cm}{similarity} \\ \hline
\end{tabular}
\end{center}

\begin{tab}
The dimension $g$ of the global symmetry group $G$ for the structure from motion problem 
in the case $d=3$ for various cameras. The group of dilations is the group of symmetries generated
by translations and scalings. The group of similarities is generated by translations, rotations and
scalings. The Euclidean group is generated by rotations and translations.
\end{tab}

\section{Dimensional analysis}

How many points are needed to reconstruct both the points and the cameras up to a global symmetry 
transformation? This question depends on the dimension and the camera model. Assume we are in 
$d$ dimensions, have $n$ points and $m$ cameras and that the camera has $f$ internal individual 
parameters and $h$ global parameters and that a $g$-dimensional group of symmetries acts on the 
global configuration space without changing the pictures.

\begin{thm}[Dimension inequality for structure from motion]
In order that one can recover from $m$ cameras and $n$ points all the camera
parameters and all the point coordinates, it is necessary that
\begin{center}
\fbox{
$
\label{dimensionformula}
dn + fm +h \leq s \; n m + g
$
}
\end{center}
where $f$ is the dimension of the internal camera parameter space, $h$ is the
dimension of global parameters which apply to all cameras, $g$ is the 
dimension of the camera symmetry group $G$ and $s$ is the dimension of the 
retinal surface $S$. We assume that all orbits of $G$ have the same dimension.
If $dn + fm +h = s n m + g$ and the map ${\rm det}(DF)$ is not constant equal to $0$ 
then the structure from motion map $F: M^m \times N^n \to S^{mn}$
can be inverted in a locally unique way almost everywhere in $F(M^m \times N^n)$.
\end{thm}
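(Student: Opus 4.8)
The plan is to reduce everything to a dimension count for the map induced by $F$ on the quotient by the symmetry group, and then to invoke the inverse function theorem together with real analyticity. First I would record the two relevant dimensions: the domain $M^m \times N^n$ (together with the $h$ global parameters shared by all cameras) has dimension $dn + fm + h$, while the target $S^{mn}$ has dimension $s\,nm$. The symmetry relation $\gamma(Q)(\gamma(P)) = Q(P)$ says precisely that $F$ is constant along the $G$-orbits, so $F$ factors as $F = \bar F \circ \pi$, where $\pi$ is the projection onto the orbit space and $\bar F$ is the reduced structure from motion map. Because all orbits are assumed to have the same dimension $g$, I would work on the open set of principal orbits and replace the possibly singular global quotient by a local slice $\Sigma$ transverse to the orbits; then $\Sigma$ has dimension $dn + fm + h - g$, and studying $F|_\Sigma$ is equivalent to studying $\bar F$ near a principal point. ``Locally unique reconstruction modulo $G$'' means exactly that $\bar F$, equivalently $F|_\Sigma$, is locally injective.

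For the inequality I would argue by the rank--nullity theorem in its geometric (Frobenius) form. Suppose, for contradiction, that $dn + fm + h - g > s\,nm$, i.e. $\dim \Sigma > \dim S^{mn}$. Then at every point the linear map $D(F|_\Sigma)$ has rank at most $\dim S^{mn} < \dim \Sigma$, so by rank--nullity its kernel is nontrivial everywhere. On the open dense set where this rank is locally constant, the kernel is an involutive distribution, since it is tangent to the level sets of $F|_\Sigma$, and Frobenius' theorem integrates it: the fibres $(F|_\Sigma)^{-1}(\sigma)$ are submanifolds of strictly positive dimension. Pulling this back upstairs, the fibres $F^{-1}(\sigma)$ have dimension strictly larger than $g$ and hence strictly contain the $G$-orbit through any of their points. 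This is an ambiguity that no normalization of the symmetry can remove, so the reconstruction is not locally unique. Contrapositively, local uniqueness forces $dn + fm + h - g \le s\,nm$, which is the asserted inequality.

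For the equality case I would use that real analyticity upgrades ``generic invertibility'' to ``invertibility almost everywhere''. When $dn + fm + h = s\,nm + g$, the slice $\Sigma$ and the target $S^{mn}$ have the same dimension, so $D(F|_\Sigma)$ is a square matrix and $\det D(F|_\Sigma)$ is a well-defined real-analytic function on $\Sigma$; this is the $\det(DF)$ of the statement, read on the quotient rather than on the full domain, where the $g$ orbit directions would force it to vanish identically. If this determinant is not identically zero, then, being real-analytic, its zero set is a proper analytic subvariety and therefore has measure zero and empty interior. At every point of the complement $D(F|_\Sigma)$ is an isomorphism, and the inverse function theorem provides a local real-analytic inverse of $\bar F$ there; hence $F$ can be inverted in a locally unique way modulo $G$ on a full-measure subset of the image, as claimed.

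The step I expect to be the main obstacle is the bookkeeping around the quotient: making precise that $F$ descends to the orbit space, that the relevant determinant is that of the reduced map $D\bar F$ rather than of the manifestly non-square $DF$ on $M^m \times N^n$, and that a transverse slice captures local injectivity modulo $G$ exactly. The assumption that all orbits share the dimension $g$ is what makes the slice construction uniform and lets me pass cleanly between $F^{-1}(\sigma)$ having dimension $g$, the good case, and having dimension larger than $g$, genuine ambiguity; handling non-principal orbits is a real but secondary issue that the standing assumptions are designed to avoid.
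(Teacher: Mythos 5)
Your proposal is correct and follows essentially the same route as the paper: parameterize the quotient by the $g$-dimensional symmetry group via Frobenius, compare the resulting dimension $dn+fm+h-g$ of the reduced configuration space with the data dimension $s\,nm$, and in the equality case use real-analyticity of the Jacobian determinant together with the inverse/implicit function theorem to invert $F$ locally almost everywhere. Your write-up is in fact tighter in two spots where the paper is loose: you derive the necessity of the inequality from rank--nullity and the constant-rank argument (positive-dimensional fibres strictly containing the $G$-orbits, hence unremovable ambiguities), whereas the paper only remarks that the image ``can not cover the entire configuration space,'' and you correctly observe that $\det(DF)$ only makes sense for the reduced map on a slice transverse to the orbits, since $DF$ on all of $M^m\times N^n$ fails to be square whenever $g>0$ --- a point the paper's proof glosses over by calling $\det(DF)$ a function on $M^m\times N^n$.
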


\begin{proof}
Unknown are $dn$ point coordinates and $fm$ camera coordinates as well as $h$ global 
camera parameters. Known are $s \; nm$ data from the correspondences because the 
camera film $S$ has dimension $s$ and because there are $nm$ camera point pairs. There is a 
$g$-dimensional symmetry group $G$ to factor out. By a special case of 
Frobenius theorem (see \cite{Bazin-Boutin}),
the quotient can be parameterized by $dn + fm-g$ parameters if the orbits of $G$ have all
the same dimension. Taking pictures is a piecewise smooth map from a 
$dn + fm-g$ dimensional manifold to a $s \; n m$-dimensional manifold. 
If the inequality is not satisfied, the image of the map $F$ producing the pictures
is a sub-manifold with smaller dimension and can not cover the entire configuration space.  \\

The last statement follows from the assumption that the map $F$ is real analytic so that
the Jacobian determinant ${\rm det}(D F)$ is a real-analytic function on $M^m \times N^n$
which can be zero only on a lower dimensional subset $Z$. On $F(N^n \times M^m \setminus Z)$
the map $F$ is locally uniquely invertible by the implicit function theorem.
\end{proof}

{\bf Remarks.} \\

1) In most cases, $h=0$ and $S$ is a hypersurface so that $s=d-1$. 
An example, where $h=1$ is a perspective camera where the 
focal length $f$ is a global parameter which is the same for all cameras. If the
focal length $f$ can change from frame to frame (for example if the photographer applies
zoom manipulations while shooting the pictures), 
the parameters $f_i$ will be included in the {\bf individual parameter space}.  \\
2) We want to recover the Euclidean camera and point data. Sometimes, 
in the literature, dimension considerations are used to recover the situation 
up to the affine group or up to the projective group. 
For applications, the Euclidean structure rather than 
the affine structure (as treated in \cite{Koenderink}) is relevant. \\
3) The rather naive dimensional analysis of the inequality is 
not sufficient to solve the inverse problem.
Ambiguities can occur on lower dimensional manifolds. So, even when 
the dimension constraints are satisfied, it can happen that a locally unique reconstruction is not possible.
We investigated these ambiguities in the case of oriented omni-directional cameras
in \cite{KnillRamirezOmni}. \\
4) The structure from motion inequality can also be written by factoring out the translational symmetry first.
Assume $s=d-1$ here. 
If one of the points $O$ is fixed and kept at the origin, we have to replace $g$ with $g'=g-d$ because the point $O$
produces a coordinate origin in each plane so that $f'=f-(d-1)$. We have then 
$$ d (n-1) + f' m + h  = (d-1) (n-1) m + g'  \; . $$
But this is the same formula.  \\
5) The structure from motion inequality can be misleading. In the orthographic affine 3D case for example, 
the location of 4 points determines any other point by linearity. 
Even so adding an other point produces an additional set of 
$2m$ data points on photographs, they are redundant because they are 
already determined by the other points. This leads to examples, where the Jacobean matrix $DF$ is singular
everywhere. \\ 

In the following table, {\bf omni} abbreviates oriented omni-cameras, {\bf n.o. omni} is an 
non-oriented omni-cameras, {\bf perspective} is a perspective camera where the focal length, the distance
between the center of projection and the retinal plane is a global parameter. An {\bf f-perspective} camera 
is a camera, where between different shots, zooming is allowed and the focal length is an individual
parameter for each camera. The table summarizes the numbers $(f,g)$ for various cameras: \\

\begin{center}
\begin{tabular}{|c|c|c|c|c|c|} \hline
 (f,g,h) &   affine     &   omni     &     n.o. omni     &  perspective    & f-perspective \\ \hline
  d=2    &    (2,3,0)   &   (2,3,0)  &        (3,4,0)    &       (3,3,1)   &   (4,4,0)   \\ \hline
  d=3    &    (5,6,0)   &   (3,4,0)  &        (6,7,0)    &       (6,6,1)   &   (7,7,0)   \\ \hline
\end{tabular}
\end{center}
\begin{tab}
Overview of the dimension $f$ of $M$ and the dimension $g$ of the global symmetry group $G$
for various cameras.
\end{tab}

Let's take the case of $m=2$ and $m=3$ cameras and see what the dimension inequality predicts if
the manifold of all camera parameters matches dimension-wise the manifold of all possible camera 
point configurations. We can use the dimension inequality to count the number of points needed for
various cameras in two dimensions. First to the {\bf stereo case} with $m=2$ cameras.  \\

\begin{center}
\begin{tabular}{|c|c|c|c|c|c|} \hline
  $m=2$    &  affine    &     omni   &  n.o. omni    &   perspective   & f-perspective \\  \hline
   d=2     &    -       &       -    &     -         &        -        &    -          \\  \hline
   d=3     &    4       &       2    &     5         &        7        &      8        \\  \hline
\end{tabular}
\end{center}
\begin{tab}
Bounds given by the dimension inequality for the number $n$ of points needed with $m=2$ cameras.
No planar camera pair can recover structure from motion. The 7 correspondences apply if both 
cameras have the same focal length. If the focal length can change, we need 8 points.
\end{tab}

For $m=3$ cameras, the situation improves, especially in the plane:

\begin{center}
\begin{tabular}{|c|c|c|c|c|c|} \hline
  $m=3$         &  affine       &    omni      & omni unoriented   &  perspective         & perspective w. zoom \\  \hline
     d=2        &    3          &     3        &       5           &    6                 &    8        \\  \hline
     d=3        &    3          &     2        &       4           &    4                 &    5        \\  \hline
\end{tabular}
\end{center}
\begin{tab}
The number $n$ of points needed with $m=3$ cameras. If the number is larger for $d=2$ 
than for $d=3$ which is always the case except for orthographic affine cameras, 
this means that the reconstruction in space will need a noncoplanarity assumption.
\end{tab}

In the affine orthographic 3D case, the classical Ullman theorem states that $4$ points suffice to 
reconstruct points uniquely up to a reflection.  
The dimension formula shows that $n=3$ is enough for a {\bf locally} unique
reconstruction. Explicit reconstruction formulas can be given 
\cite{KnillRamirezUllman}. An additional 4'th point reduces the number
of discrete ambiguities but over-determines the system of equations so that reconstruction is only 
possible on a lower dimensional manifold. \\

The dimension formula only tells hat happens generically. 
For example, if the camera-point configurations are contained in one single plane, 
the larger 2D numbers apply. 
Even so the dimensional analysis shows that two points should be enough in space, we need 
three points if the situation is coplanar and noncolinearity conditions are needed to eliminate all 
ambiguities. We will see with counter examples that these results are sharp. The dimension formula gives
a region in the $(n,m)$ plane, where the structure from motion problem can not have a unique
solution. We call these regions {\bf forbidden region} of the structure from motion problem. 

\section{Orthographic cameras}

In dimension $d$, an affine orthographic 
camera is determined by $f={\rm dim}(SO_d) + {\bf R}^{d-1} = d (d-1)/2 + (d-1)$ 
parameters and a point by $d$ parameters. We gain $(d-1)$ coordinates for each point-camera pair. 
The global Euclidean symmetry of the problem (rotating and translating the
point-camera configuration does not change the pictures) gives us the {\bf structure from the 
motion inequality for orthographic cameras}
$$  n d + m [ d (d-1)/2 + (d-1)] \leq (d-1) n m + d + d (d-1)/2 $$
which for $m=3$ and $d=2$ this gives $2 n + 6 \leq 3 n  + 2 + 1$ which means that $n=3$ is sharp. 
For $m=3$ and $d=3$ we are left with $3 n + 15 \leq 6 n + 3 + 3$ which means $n=3$ is sharp.
For $m=2$ and $d=3$ we get $3 n + 10 = 16+ 6$ which indicates $n=4$ points is sharp. Note that
unlike for $(m,n)=(3,3)$, where a locally unique reconstruction is possible, this is not the 
case for $(m,n) =(2,4)$. For $m=2$ orthographic cameras in space and arbitrarily many points, there
are always deformation ambiguities. \\

\begin{center}
\parbox{15.5cm}{
\parbox{7.5cm}{\scalebox{0.45}{\includegraphics{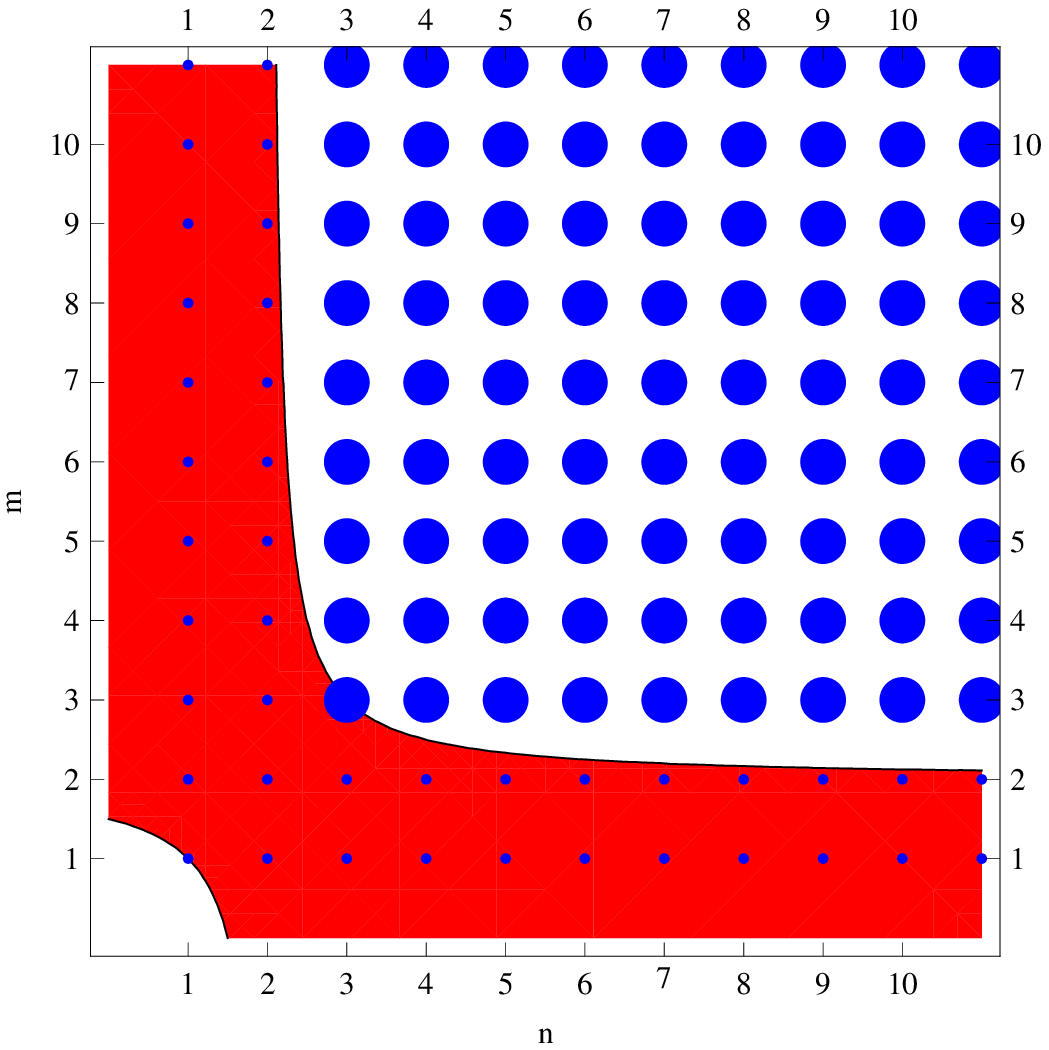}}}
\parbox{7.5cm}{\scalebox{0.45}{\includegraphics{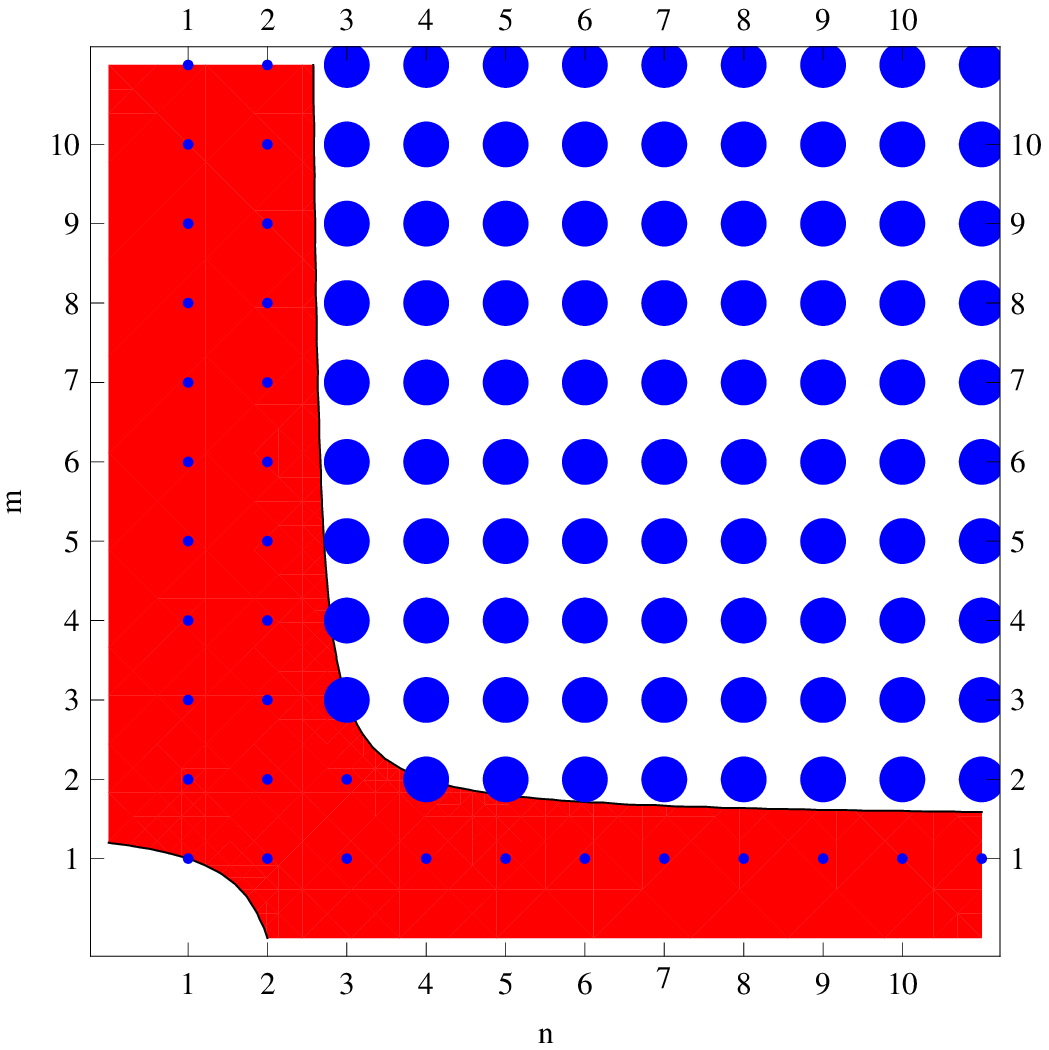}}}
}
\end{center}
\begin{center}
\parbox{15.5cm}{
\parbox{7.5cm}{Orthographic \\ (d,f,g) = (2,2,3)}
\parbox{7.5cm}{Orthographic \\ (d,f,g) = (3,5,6)}
}
\end{center}
\begin{fig}
The forbidden region in the $(n,m)$ plane for affine orthographic cameras.
In space, this is the situation of the celebrated Ullman theorem. 
\end{fig}

\section{Perspective cameras}

\begin{center}
\parbox{15.5cm}{
\parbox{7.5cm}{\scalebox{0.45}{\includegraphics{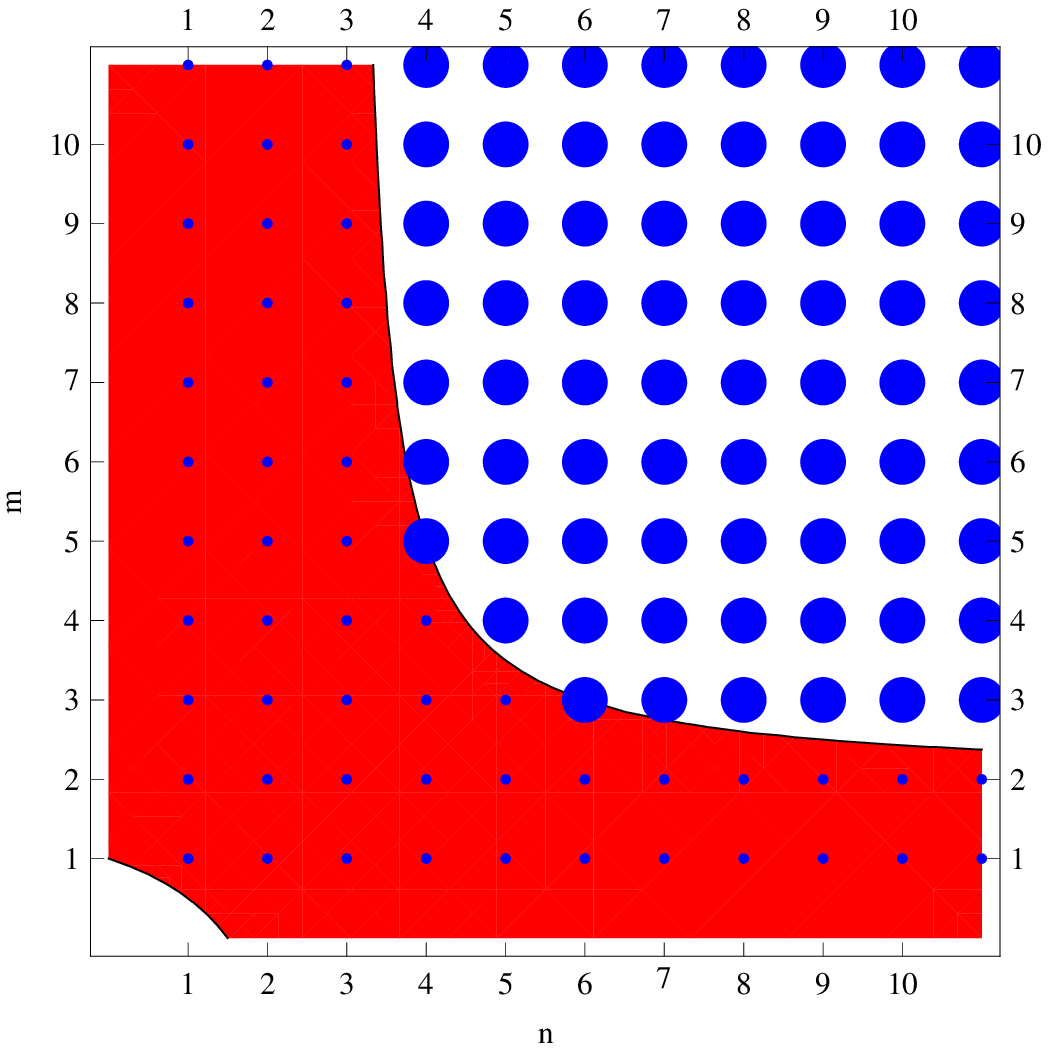}}}
\parbox{7.5cm}{\scalebox{0.45}{\includegraphics{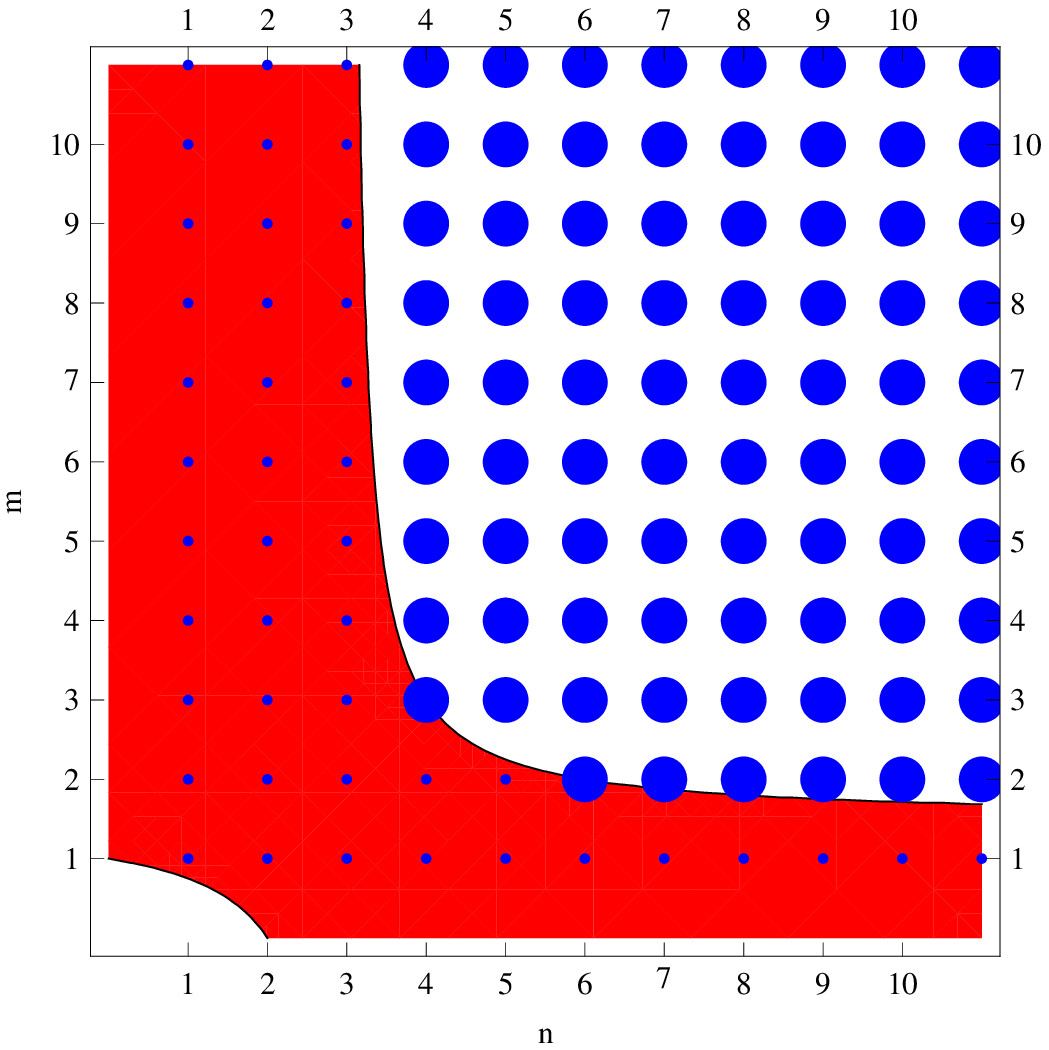}}}
}
\end{center}
\begin{center}
\parbox{15.5cm}{
\parbox{7.5cm}{Perspective \\ (d,f,g) = (2,3,3)}
\parbox{7.5cm}{Perspective \\ (d,f,g) = (3,6,6)}
}
\end{center}
\begin{fig}
The forbidden region in the $(n,m)$ plane for perspective cameras with given known
focal length. We see that the stereo case $m=2$ needs $n=6$ points. Since it is a 
border line case, we have a locally unique reconstruction in general. 
\end{fig}

We see that $n=6$ points are needed in the stereo case with $m=2$ cameras. This assumes 
that the focal length is known. If the focal length is not known, then $n=7$ points 
are needed. This is the situation first considered by Chasles \cite{Chasles}. 
One can deduce the number
$7$ also differently: fix the first camera plane as the $xy$ plane and take the focal 
point on the $z$ axes. This needs 1 parameter. The second camera needs 6 parameters,
3 for the focal point and 3 for the plane orientation. For each point, we add $3$ unknowns but
gain $m \cdot (d-1) = 2  \cdot 2=4$ coordinates. That means for every added point, we gain 
one parameter. So, 7 points are enough. 

\begin{center}
\parbox{15.5cm}{
\parbox{7.5cm}{\scalebox{0.45}{\includegraphics{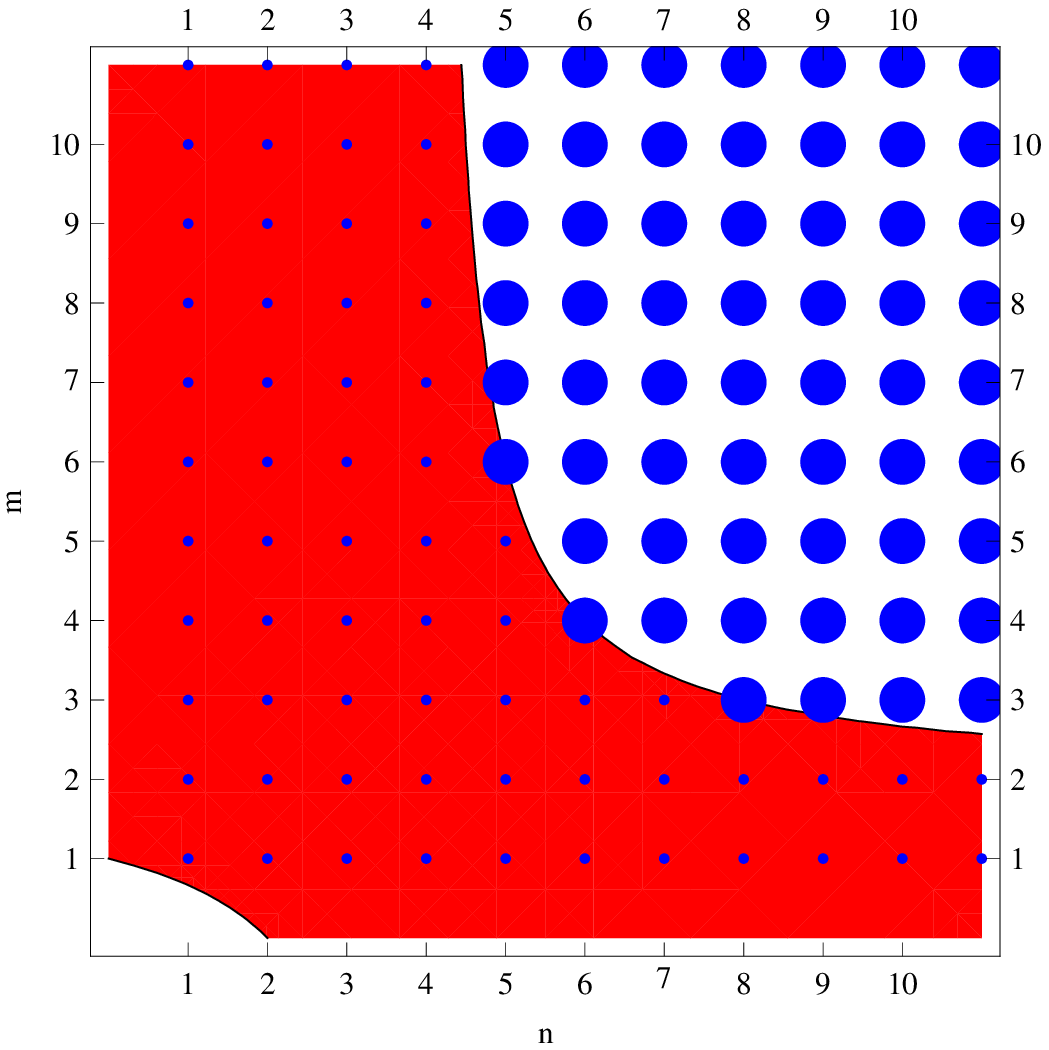}}}
\parbox{7.5cm}{\scalebox{0.45}{\includegraphics{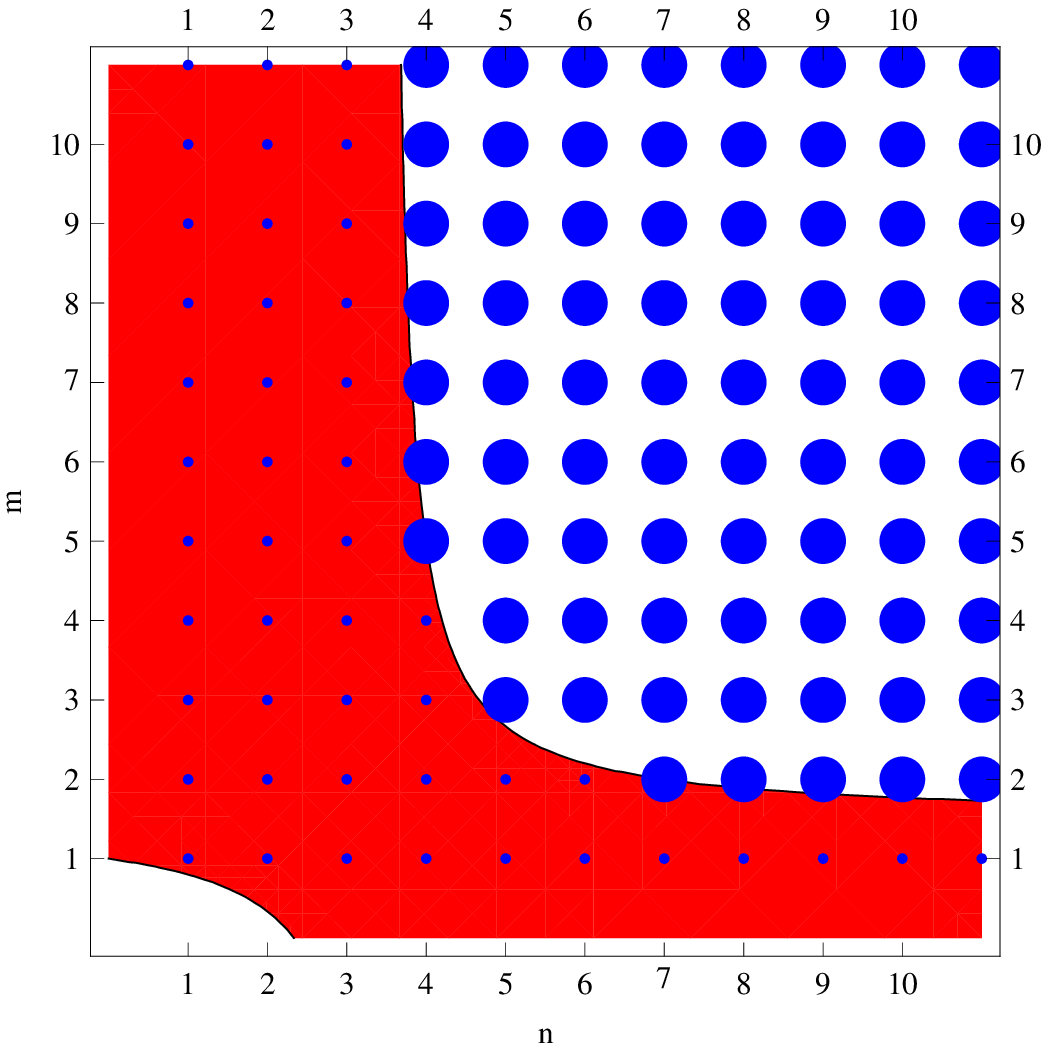}}}
}
\end{center}
\begin{center}
\parbox{15.5cm}{
\parbox{7.5cm}{Perspective with zoom \\ (d,f,g) = (2,4,4)}
\parbox{7.5cm}{Perspective with zoom \\ (d,f,g) = (3,7,7)}
}
\end{center}
\begin{fig}
The forbidden region in the $(n,m)$ plane for perspective cameras with unknown
focal length. The photographer can change the focal length from one camera to 
the next. 
\end{fig}

\section{Four cameras}

Finally, lets look how many points we expect for $m=4$ cameras: \\

\begin{center}
\begin{tabular}{|c|c|c|c|c|c|} \hline
  $m=4$         &  affine       &    omni      & omni unoriented   &  perspective         & perspective w. zoom \\  \hline
     d=2        &    3          &     3        &       4           &    5                 &    7        \\  \hline
     d=3        &    3          &     2        &       4           &    4                 &    5        \\  \hline
\end{tabular}
\end{center}
\begin{tab}
The number $n$ of points needed with $m=4$ cameras.  So, seeing 4 points with 4 perspective cameras should
allow us to reconstruct both the cameras and points in general. Again, we need in general more cameras in 
coplanar situations. 
\end{tab}

\section{Oriented omni-directional cameras}

How many points and cameras do we need for oriented omni-directional cameras?

\begin{center}
\parbox{15.5cm}{
\parbox{7.5cm}{\scalebox{0.45}{\includegraphics{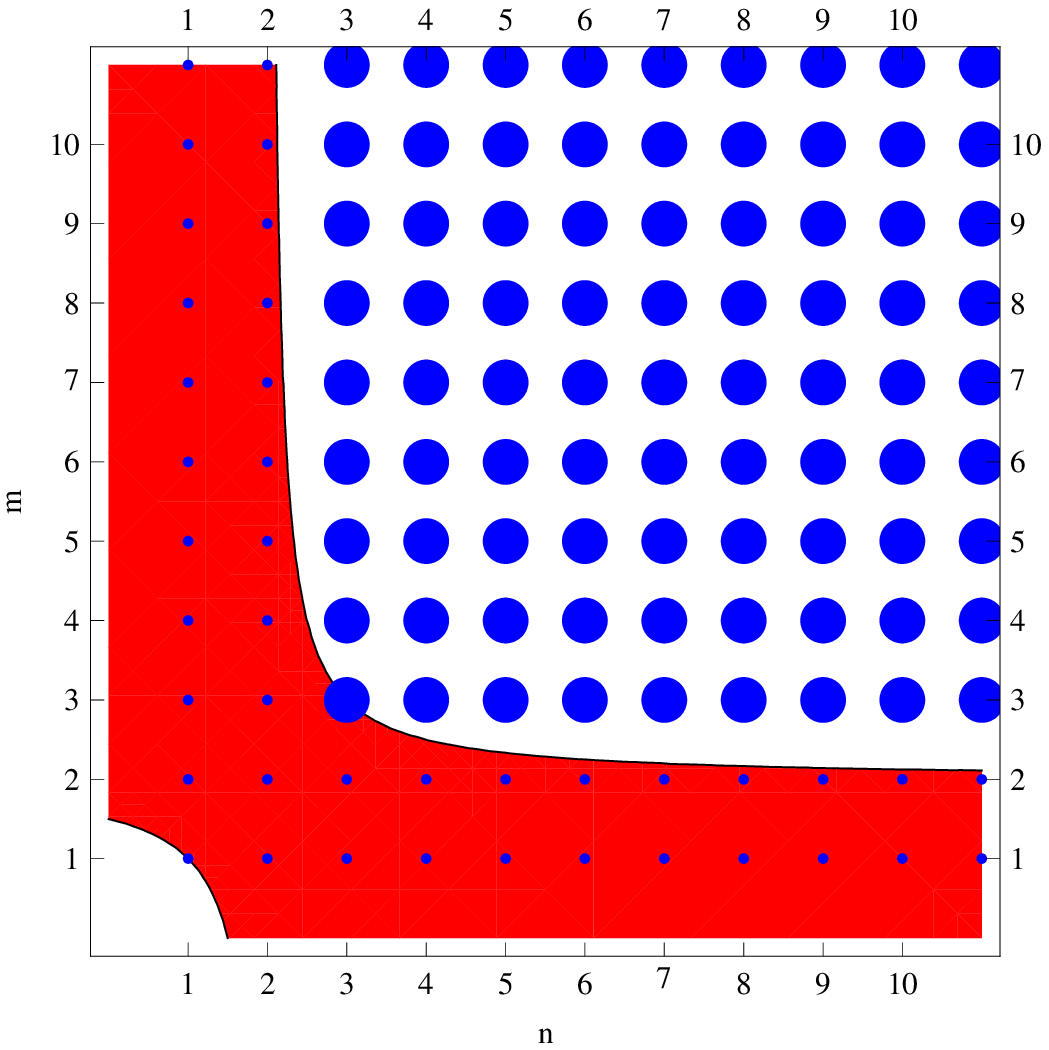}}}
\parbox{7.5cm}{\scalebox{0.45}{\includegraphics{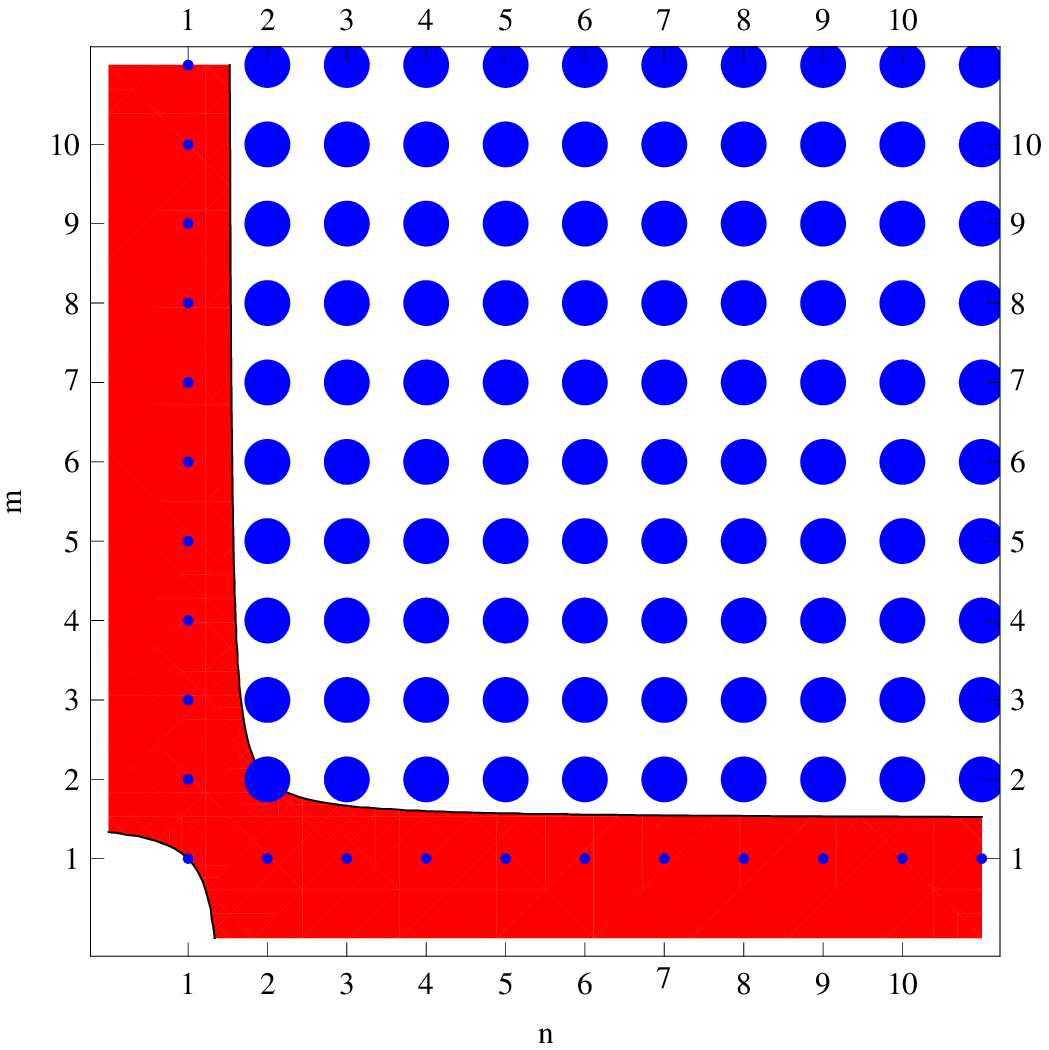}}}
}
\end{center}
\begin{center}
\parbox{15.5cm}{
\parbox{7.5cm}{Oriented Omni  \\ (d,f,g) = (2,2,3)}
\parbox{7.5cm}{Oriented Omni  \\ (d,f,g) = (3,3,4)}
}
\end{center}
\begin{fig}
The forbidden region in the $(n,m)$ plane for oriented omni-directional cameras.
\end{fig}

Let's compare the two sides of the dimension formula in the oriented planar omni-directional
case $(d,f,g) = (2,2,3)$:  \\

\begin{tabular}{|c|c|c|c|c|} \hline
   Cameras &  Points        &  equations nm    &  unknowns 2(n+m)-3  &   unique ?  \\ \hline 
   m=1     &     n          &     n            &    2n-1             &   no, one camera ambiguities   \\
   m=2     &     n          &    2n            &    2n+1             &   no, two camera ambiguities  \\
   m=3     &    $n=2$       &     6            &     7               &   no, two point ambiguities  \\
   m=3     &    $n \geq 3$  &    3n            &    2n+3             &   yes, if no ambiguities\\
   m=4     &    $n \geq 3$  &    4n            &    2n+5             &   yes, if no ambiguities\\ \hline
\end{tabular}

\parbox{14.8cm}{
\parbox{7cm}{\scalebox{0.60}{\includegraphics{goodpoints/orientedomni2d.ps}}}
\parbox{7cm}{
\begin{fig}
In the plane $d=2$ with camera parameters $(f,g)=(2,3)$. The reconstruction region 
$dn+fm \leq (d-1) n m + g$ is given by $mn-2m-2n+3 <0$. The situation $(n,m)=(3,3)$ is 
the only borderline case. Also in all other cases $n,m \geq 3$ we have more or equal 
equations than unknowns.
\end{fig}
}
}

\section{Non-oriented omni-cameras}

For non-oriented spherical cameras in the plane, the camera manifold $M$ is the three dimensional 
space ${\bf R}^2 \times SO_1$ and the global symmetry group is the $g=4$-dimensional group of
similarities. The structure from motion inequality reads
$$ 2 n + 3 m = n m + 4    \; . $$
For $m=3$ cameras, we need $n=5$ points, for $m=4$ cameras, we need $4$ points. Because
both of these cases are borderline cases, we expect a locally unique reconstruction almost 
everywhere. \\

In space, where $d=3, f=6$ and $g=7$, the inequality is
$$ 3 n + 6 m = 2 n m + 7  \; . $$ 
For $m=3$ cameras we need $4$ points, for $m=2$ cameras, we need $n=5$ points. 

\begin{center}
\parbox{15.5cm}{
\parbox{7.5cm}{\scalebox{0.45}{\includegraphics{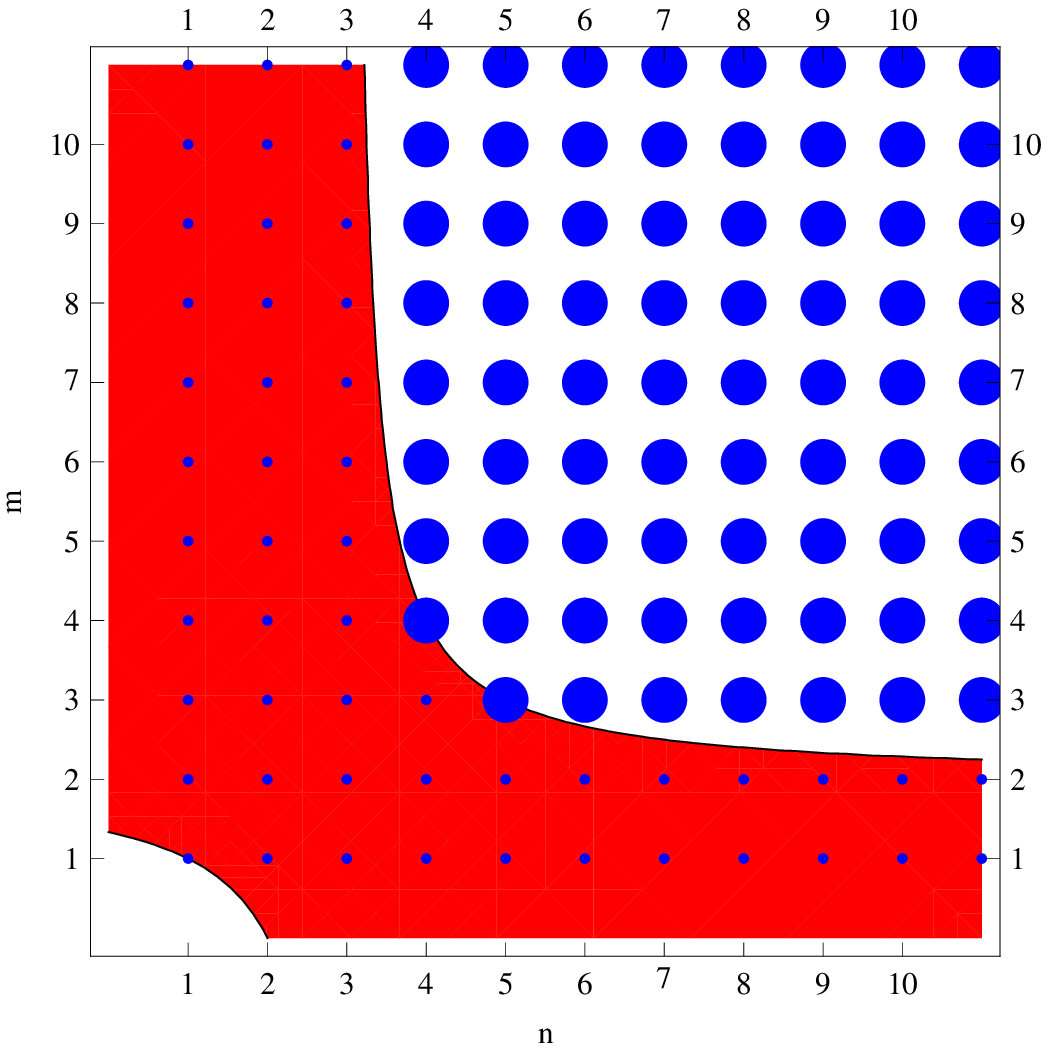}}}
\parbox{7.5cm}{\scalebox{0.45}{\includegraphics{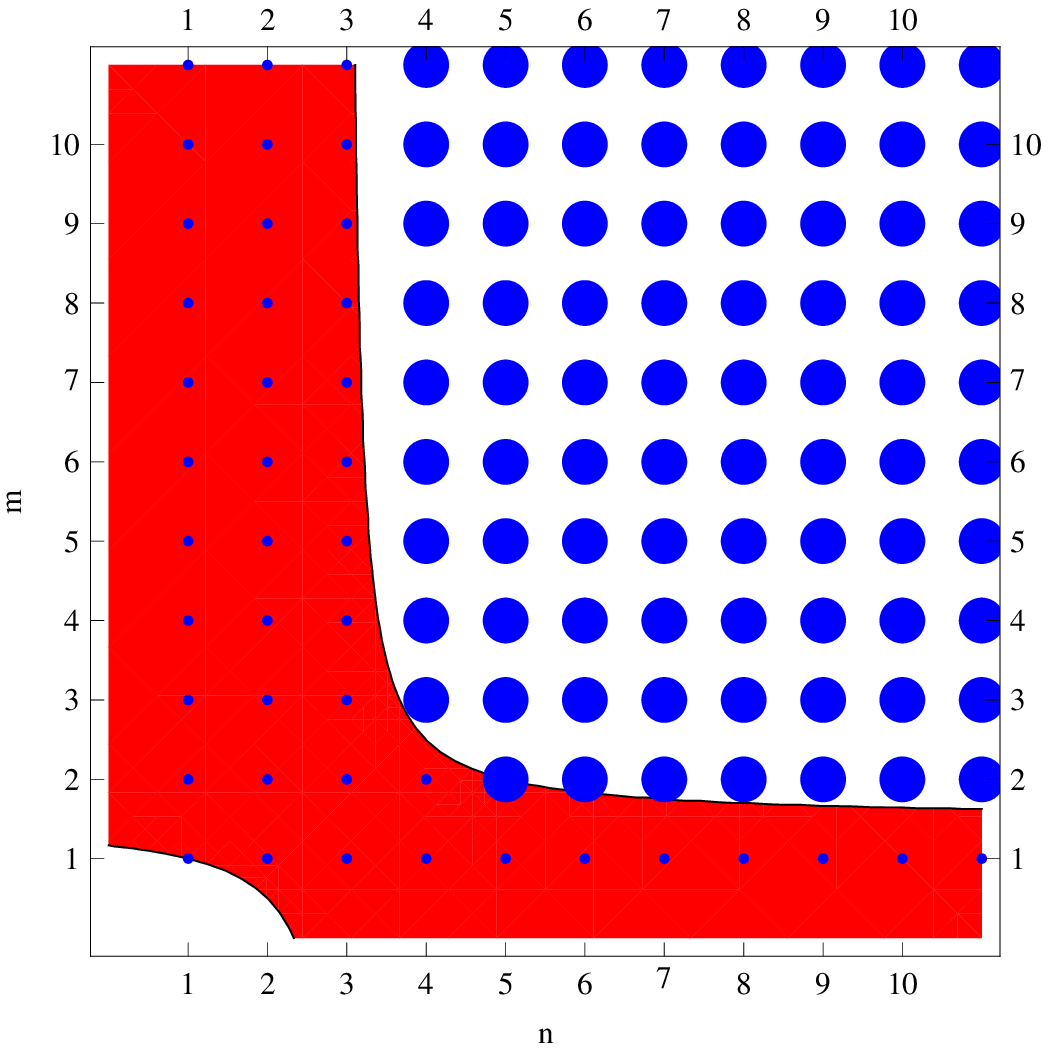}}}
}
\end{center}
\begin{center}
\parbox{15.5cm}{
\parbox{7.5cm}{Non oriented Omni  \\ (d,f,g) = (2,3,4)}
\parbox{7.5cm}{Non oriented Omni  \\ (d,f,g) = (3,6,7)}
}
\end{center}
\begin{fig}
The forbidden region in the $(n,m)$ plane for oriented omni-directional cameras.
\end{fig}

{\bf Remarks}.  \\
1) It seems unexplored, under which conditions the construction is
unique for unoriented omni-cameras. Due to the nonlinearity
of the problem, this is not as simple as in the
oriented case \cite{KnillRamirezOmni}.
The equations for the unknown point positions $P_i = (x_i,y_i)$ and camera
positions $Q_j = (a_j,b_j)$ and camera angles $\alpha_j$ are 
$$  \sin(\theta_{ij}+\alpha_j) (x_i - a_j) = \cos(\theta_{ij}+\alpha_j) ( y_i - b_j)  \; . $$

2) For omni-directional cameras in space which all point in the same direction but turn around this axis,
the dimension analysis is the same. We can first compute the first two coordinates and then the third coordinate.
When going to the affine limit, these numbers apply to camera pictures for which we know one direction.
This is realistic because on earth, we always have a gravitational direction. 

\section{A codimension 2 camera}

We quickly look at an example of a camera, where the retinal surface is not a hypersurface. The camera $Q$
is given by a line $S$ in space. 
The map $Q$ is the orthographic projection of a point $P$ onto $S=S(Q)$. 

How many points do we need for a reconstruction with $3$ cameras?  We have $d=3$ and $s=1$. Because
a line in space is determined by a point and a direction, the dimension $f$ of the camera manifold is 
$f=3$. The global symmetry group consists of Euclidean transformations, which gives $g=6$. The 
structure from motion inequality tells
$$    3 n + 3 m = n m + 6  \; . $$
For $m=4$ cameras, we need $n=6$ points. 

\begin{center}
\parbox{7.5cm}{\scalebox{0.45}{\includegraphics{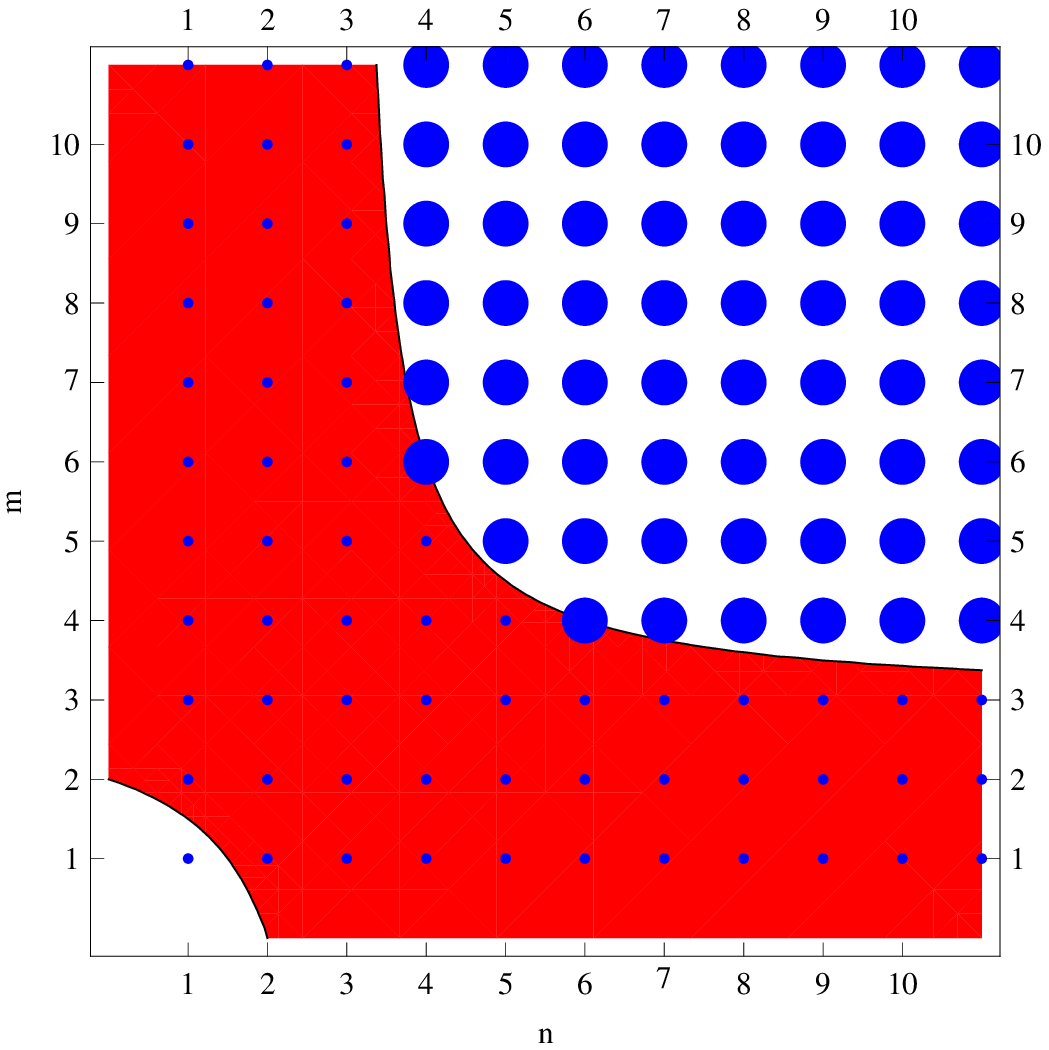}}}
\end{center}
\begin{center}
\parbox{15.5cm}{
\parbox{7.5cm}{The forbidden region for codimension 2 cameras  \\ (d,f,g,s) = (3,3,6,1).}
}
\end{center}
\begin{fig}
The forbidden region in the $(n,m)$ plane for line cameras in space. 
\end{fig}

\section{Moving bodies}

We consider now the situation where a camera moves through a scene, in which the bodies themselves 
can change location with time. This setup justifies the relatively abstract definition of a camera as
a transformation $Q: N \to N$ satisfying $Q^2=Q$ such that the image $F(N)$ is isomorphic to 
a lower dimensional manifold $S$. \\

Examples of structure from motion problems with moving bodies are a camera mounted on a car
moving in a traffic lane, where the other cars as well as part of the street define points. 
An other setup could be a movable camera in a football stadium where the points are the 
players which move on a football field. A historically important example is the 
earth observing other planets and stars. The last example is historically the oldest structure
from motion problem: it is the task to reconstruct the position of our earth within the other
structures of the universe. \\

If points can move, we still have $n m$ equations and a global $g$ dimensional 
symmetry group but now $3nk+3mf$ 
unknown parameters. The dimension formula still applies. But now, the dimension of the space $N$ is
$d (k+1)$. The point space $M$ is larger and the retinal plane $S$ has a much lower dimension than $M$. 
Let's formulate it as a lemma:

\begin{lemma}[Dimension formula for moving bodies]
If the motion of every point in the $d$ dimensional scene is described with a Taylor expansion 
of the order $k$, then the following condition has to be satisfied
$$ d n (k+1) + m f +h \leq n m (d-1) + g  $$
so that we can hope to reconstruct the motion of the points simultaneously to the
motion of the camera.
\end{lemma}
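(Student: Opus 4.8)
The plan is to reduce the moving-bodies lemma to the already-established dimension inequality (Theorem, the boxed formula $dn + fm + h \leq s\,nm + g$) by identifying the correct point manifold and retinal dimension. The essential observation, already anticipated in the text preceding the lemma, is that a point which moves according to a Taylor expansion of order $k$ is not a single location in $\mathbf{R}^d$ but a $k$-jet, so the configuration manifold of one point is no longer $d$-dimensional. First I would make this precise: a moving point $P_i(t) = \sum_{l=0}^{k} P_{il} t^l$ (or $\sum_{l=0}^{k-1}$ as in the introduction) is determined by its $k+1$ vector coefficients $P_{i0},\dots,P_{ik}$, each in $\mathbf{R}^d$, so the point manifold becomes $\tilde N = N^{(k)}$ of dimension $\tilde d = d(k+1)$.

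The second step is to check that nothing else in the hypothesis of the Theorem changes. The cameras still live in the same $f$-dimensional manifold $M$ with the same $h$ global parameters, since moving the bodies does not alter what a camera is; the retinal surface $S$ still has dimension $s$, which in the hypersurface case is $s = d-1$ (crucially $d-1$, not $\tilde d - 1$, because each picture still records the projection of a single instantaneous location onto the $(d-1)$-dimensional film). The global symmetry group $G$ acts on this enlarged configuration space by acting simultaneously on all jet coefficients, and I would note that its dimension $g$ is unchanged. With these substitutions the structure from motion map is $F : \tilde N^{\,n} \times M^m \to S^{mn}$, and the Theorem's inequality becomes
$$ \tilde d\, n + f m + h \leq s\, n m + g, \qquad \tilde d = d(k+1),\ s = d-1, $$
which upon substitution is exactly the claimed bound $d n (k+1) + m f + h \leq n m (d-1) + g$.

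The main obstacle I expect is verifying that the Frobenius-theorem hypothesis in the Theorem still holds on the enlarged space, namely that all orbits of $G$ acting on $\tilde N^{\,n} \times M^m$ have the same dimension. Because $G$ now acts diagonally on the jet coefficients, one must confirm that this action does not acquire exceptional lower-dimensional orbits, and that the group dimension $g$ that applies is genuinely the same $g$ as in the static problem rather than an enlarged symmetry coming from time-reparametrization or from the extra jet directions. Once the action is seen to be the original $G$ lifted componentwise to the coefficients, with orbits of constant dimension $g$, the lemma follows immediately by invoking the Theorem verbatim with the substituted values of the point-space dimension and retinal dimension; no new analytic content is required beyond that bookkeeping.
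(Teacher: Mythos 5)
Your proposal is correct and follows essentially the same route as the paper: the paper's own justification (the paragraph preceding the lemma) is precisely that the dimension formula of the Theorem still applies once the point manifold is replaced by the $d(k+1)$-dimensional jet space, with $f$, $h$, $g$, and $s=d-1$ unchanged. Your added check that the $G$-orbits on the enlarged space keep constant dimension $g$ is a careful touch the paper leaves implicit, but it does not change the argument.
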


For example: assume we know that all points $P_i$ move on circles in the plane with 
constant angular velocity. The point configuration space $N$ is ${\bf R}^4$, because for every point,
we have to specify the center of the circle, as well as the vector from the center to the point.
How many points do we need for $m$ cameras? \\
For affine or oriented omni cameras with $(f,g)=(2,3)$, the structure from motion inequality gives
$$ n 4 + m 3 = n m + 4  \; . $$
We need at least $m=5$ cameras to allow a reconstruction. The inequality assures
us that with 4 pictures, a unique reconstruction is impossible. For $m=5$ cameras, we need at least $n=11$ 
points.  For $m=6$ cameras, we need at least $n=7$ points.  
If we observe a swarm of 11 points with 5 camera frames, we expect a reconstruction of the moving
points and the cameras. 

\parbox{14.8cm}{
\parbox{7cm}{\scalebox{0.70}{\includegraphics{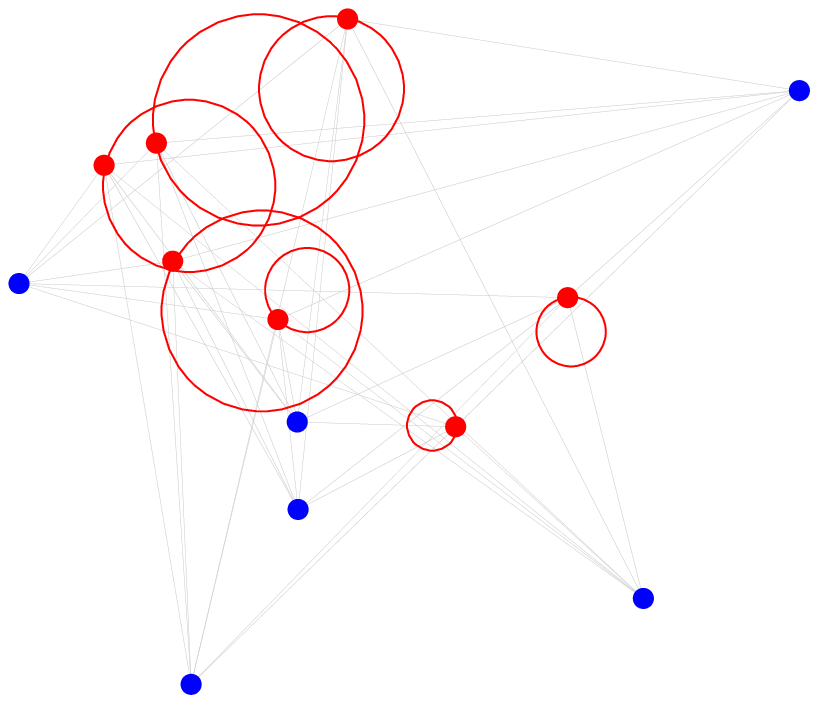}}}
\parbox{7cm}{
\begin{fig}
$m=6$ oriented omni-directional cameras observe $n=7$ points moving on circles. 
The reconstruction recovers the circles and the cameras up to a global rotation, 
translation and scaling. 
The camera takes $m$ pictures at times $t_1,...,t_m$. We observe the points $P_i(t_j)$ if the times
$t_i$ are known. 
\end{fig}
}
}

\vspace{12pt}
\bibliographystyle{plain}
\bibliography{3dimage}
\end{document}